\documentclass[10pt]{article}

\usepackage{natbib}
\usepackage{amsmath, amssymb, amsthm}
\usepackage{graphicx}
\usepackage{fullpage}
\usepackage{hyperref}
\usepackage{enumitem}
\usepackage{bm}
\usepackage{authblk}

\usepackage{algorithm}
\usepackage{algorithmic}
\usepackage{booktabs}
\usepackage{xcolor}

\definecolor{darkblue}{rgb}{0.0, 0.0, 0.5}

\hypersetup{
    colorlinks=true,
    linkcolor=darkblue,
    anchorcolor=darkblue,
    citecolor=darkblue,
    filecolor=darkblue,
    menucolor=darkblue,
    urlcolor=darkblue
}

\usepackage[nameinlink,noabbrev]{cleveref}

\usepackage{caption}

\newtheorem{theorem}{Theorem}
\newtheorem{lemma}{Lemma}
\newtheorem{proposition}{Proposition}

\theoremstyle{definition}
\newtheorem{definition}{Definition}

\newtheorem{remark}{Remark}

\title{Exact Convex Reformulations of Linear Neural Networks \\ via Completely Positive Lifting}
\author[]{Karthik Prakhya}
\author[]{Alp Yurtsever}
\affil[]{Umeå University, Sweden \\ \vspace{0.25em} \texttt{karthik.prakhya@umu.se} \\ \texttt{alp.yurtsever@umu.se}}
\date{}

\begin{document}

\maketitle

\begin{abstract}
We show that the training problem of a deep linear neural network under the squared loss admits an exact convex reformulation in a lifted space over a generalized completely positive cone.
The reformulation has the same optimal value as the original nonconvex problem and is linear in the lifted variables, with all nonconvexity encoded in the cone constraint.
Its ambient lifted dimension depends only on the input and output dimensions, independent of the network depth and the number of data points, and the bottleneck width enters only through scalar constraints.
The construction proceeds by reducing the multilayer parameterization to a bilinear factorization, lifting it to a rank-constrained semidefinite program, expressing the rank constraint via a complementarity condition, and applying a completely positive lifting. 
While the resulting formulation is computationally intractable in general, it gives an exact conic representation of the nonconvexity induced by linear factorization and connects linear neural network training with copositive programming. 
\end{abstract}

\section{Introduction}
\label{sec:introduction}
Consider the training problem of an $N$-layer linear neural network:
\begin{equation} 
    \min_{W_1, \dots, W_N}~
    \frac{1}{2n} \bigl\| W_N W_{N-1} \cdots W_1 X - Y \bigr\|_F^2,
    \label{eqn:deep-linear-net}
\end{equation}
where $X \in \mathbb{R}^{d_0 \times n}$ denotes the input data, $Y \in \mathbb{R}^{d_N \times n}$ the targets, and $W_k \in \mathbb{R}^{d_k \times d_{k-1}}$ the layer weight matrices.  
The network output is given by $\hat{Y} = W_N W_{N-1} \cdots W_1 X$. 

Although the input-output map is linear, the multilayer parameterization induces a nonconvex optimization problem due to the factorized structure of the weights.
Since compositions of linear mappings remain linear, depth does not increase expressivity, and problem~\eqref{eqn:deep-linear-net} is equivalent, at the level of represented functions (and hence also in terms of global optima) to a low-rank matrix factorization with bilinear structure~\citep{saxe2013exact,kawaguchi2016deep}. 

In this work, we show that this nonconvex problem admits an exact convex reformulation.  
Specifically, we establish that it is equivalent to a convex optimization problem over a completely positive cone generated by positive semidefinite matrices in a lifted space.  
This formulation captures the nonconvexity induced by the factorization through a higher-dimensional conic representation.
While the resulting formulation is convex, it involves a cone that is computationally intractable in general. The value of this reformulation is structural, as it provides an exact geometric characterization of the nonconvexity induced by factorization.

\subsection{Related Work}

Convex formulations of neural networks have been studied through infinite-dimensional function-space representations \citep{bengio2005convex,bach2017convex}, as well as through convex descriptions of overparameterized feature distributions \citep{fang2022convex}. 
This perspective connects neural networks with reproducing kernel Banach spaces \citep{spek2025duality}, data-fitting problems over infinite-dimensional spaces \citep{rosset2007ell1, parhi2021banach, shenouda2024variation}, and function approximation theory \citep{barron1993universal, mhaskar2004tractability, meng2022new, liu2025integral}.

Recent work has shown that certain neural network training problems admit exact finite-dimensional convex formulations.  
\citet{sahiner2020vector} developed convex equivalents for the non-convex two-layer neural network training problem with ReLU activations based on a theory of convex semi-infinite duality and enumeration of all possible sign patterns from the ReLU outputs. 
Using a similar approach, \citet{ergen2020implicit} developed convex optimization formulations for two- and three-layer convolutional neural networks (CNNs) with ReLU activations.
\citet{pilanci2020neural} demonstrated that training two-layer ReLU networks with weight decay can be rewritten as a convex optimization problem with structured sparsity regularization.
\citet{ergen2021global} demonstrated that training multiple three-layer ReLU regularized sub-networks can be equivalently cast as a convex optimization problem in a higher-dimensional space.
\citet{ergen2021convex,ergen2021revealing} showed that optimal hidden-layer weights for two-layer and certain deep ReLU neural networks are the extreme points of a convex set.
Follow-up work by \citet{ergen2023convexlandscape} further characterized the geometry of these formulations and the resulting convex landscapes.  
\citet{sahiner2022unraveling} further applied these methods to attention and related token-mixing modules in vision transformers, deriving finite-dimensional convex formulations for self-attention, MLP-Mixer, and Fourier Neural Operator mechanisms. 

Another major approach constructs convex relaxations through lifting techniques.  
These methods represent parameters through higher-order moment matrices or Gram matrices, leading to semidefinite or completely positive programs.
As an example, \citet{bartan2021neural} explored semidefinite lifting for training neural networks with polynomial activations.
This work was further extended by \citet{bartan2021training} to quantized neural networks with polynomial activations and by \citet{bartan2023convex}
to certain deep neural networks with polynomial and ReLU activations.
More recent work by \citet{prakhya2024convex} explores convex formulations based on lifted representations for shallow ReLU networks.  

\subsection{Notation} 

We denote by $\mathbb{S}^p$ the space of $p \times p$ real symmetric matrices, and by $\mathbb{S}_+^p$ its positive semidefinite cone. 
For symmetric matrices $A,B \in \mathbb{S}^p$, we write $A \succeq B$
(or $A \preceq B$) if $A-B$ is positive semidefinite. 
We write $\operatorname{vec}(A)$ for the vector obtained by stacking the columns of $A$, and $\operatorname{mat}(\cdot)$ for its inverse, i.e., for $x \in \mathbb{R}^{d^2}$, $\operatorname{mat}(x) \in \mathbb{R}^{d \times d}$ satisfies $\operatorname{vec}(\operatorname{mat}(x)) = x$. 
For inner product spaces $\mathbb{V}_1$ and $\mathbb{V}_2$, we write $\mathbb{V}_1 \oplus \mathbb{V}_2$ for their direct sum, equipped with the natural inner product. 
We use $\otimes$ to denote both the tensor and Kronecker product, depending on context. 

We work with lifted variables of the form $z \otimes z$ with $z \in \mathbb{V}$, which lie in the symmetric subspace of $\mathbb{V} \otimes \mathbb{V}$, i.e., the subspace invariant under exchanging the two tensor factors $(u \otimes v) \mapsto (v \otimes u)$, denoted by $\mathrm{Sym}^2(\mathbb{V})$. 
In particular, for $\mathbb{V} = \mathbb{R}^p$, this subspace can be identified with $\mathbb{S}^p$, and the tensor $z \otimes z$ corresponds to the rank-one matrix $zz^\top$ under this identification. 

\section{Exact Convex Reformulation}

Our main result establishes an exact convex reformulation for the training problem of a deep linear neural network in \eqref{eqn:deep-linear-net} based on generalized completely positive cones. 
We first recall the necessary definitions. 

\begin{definition}\label{def:completely-positive-cone}
The \textit{completely-positive cone} is defined as
\( \mathcal{CP}^p := \operatorname{conv}\{zz^\top : z \in \mathbb{R}^p_+\}. \)
Its dual cone, the \textit{copositive cone}, is defined as 
\( \mathcal{COP}^p = \{A \in \mathbb{S}^p : z^\top A z \ge 0,\; \forall\, z \in \mathbb{R}^p_+\}. \)
\end{definition}

Clearly, every completely positive matrix is positive semidefinite, and every positive semidefinite matrix is copositive, thus $\mathcal{CP}^p \subseteq \mathbb{S}_+^p \subseteq \mathcal{COP}^p$.  

This notion extends naturally to general cones \citep{bai2016conic,nishijima2024approximation}. 

\begin{definition} \label{def:k-copositivity}
Let $\mathbb{K} \subset \mathbb{V}$ be a closed convex cone in the inner product space $\mathbb{V}$. 
The \emph{$\mathbb{K}$-completely-positive cone} is defined as
\begin{equation*}
    \mathcal{CP}_{\mathbb{K}}
    := \operatorname{conv}\{ z \otimes z : z \in \mathbb{K} \}.
\end{equation*}
Its dual, \( \mathcal{COP}_{\mathbb{K}} = \{A \in \mathrm{Sym}^2(\mathbb{V}) : \langle A, z \otimes z \rangle \geq 0,\; \forall\, z \in \mathbb{K}\},\) is called the \emph{$\mathbb{K}$-copositive cone}.
\end{definition}

Completely positive cones arise naturally in the convex reformulations of quadratic optimization problems. 
They capture the convex hull of rank-one outer products over a cone, and thus provide a useful tool for lifted representation of certain nonconvex quadratic structures.

We are now ready to introduce the lifted formulation. 

\begin{theorem}[Exact CP reformulation] \label{thm:informal-main-result}
Let $d := d_N + d_0$ and $r:=\min_k d_k$. Define the selection matrices
\begin{equation}
    \label{eq:pu-pv}
    P_u = \begin{bmatrix} I_{d_N} \\ 0 \end{bmatrix} \in \mathbb{R}^{d \times d_N}, \qquad
    P_v = \begin{bmatrix} 0 \\ I_{d_0} \end{bmatrix} \in \mathbb{R}^{d \times d_0}.
\end{equation}
Define $\widehat{\mathbb{K}} := \mathbb{R}_+ \times \operatorname{vec}(\mathbb{S}_+^d)\times \operatorname{vec}(\mathbb{S}_+^d)\times \operatorname{vec}(\mathbb{S}_+^d)$, where we used $\operatorname{vec}(\mathbb{S}_+^d)$ to denote the image of $\mathbb{S}_+^d$ under the $\operatorname{vec}(\cdot)$ operator. 
Then, Problem~\eqref{eqn:deep-linear-net} is equivalent to the following convex program:
\begin{equation}
\label{thm:main-result}
\begin{aligned}
\min_{z_i, Z_{ij}} \quad
& \frac{1}{2n}\, \Big( \langle P_v X X^\top P_v^\top \otimes P_u P_u^\top, Z_{11} \rangle - 2 \langle \operatorname{vec}(P_u Y X^\top P_v^\top), z_1 \rangle + \| Y \|_F^2 \Big) \\
\mathrm{s.t.} \quad~\,
& \operatorname{tr}(\operatorname{mat}(z_2)) = r, \\
& \langle \operatorname{vec}(I_d) \operatorname{vec}(I_d)^\top, Z_{22} \rangle = r^2, \\
& z_2 + z_3 = \operatorname{vec}(I_d), \\
& \operatorname{diag}(Z_{22}+Z_{23}+Z_{32}+Z_{33}) = \operatorname{vec}(I_d),  \\
& \operatorname{tr}(\operatorname{mat}(z_{1})) - \operatorname{tr}(Z_{12}) \le 0, \\
& \begin{bmatrix}
1 & z_1^\top  & z_2^\top & z_3^\top  \\
z_1 & Z_{11} & Z_{12} & Z_{13} \\
z_2 & Z_{21} & Z_{22} & Z_{23} \\
z_3 & Z_{31} & Z_{32} & Z_{33} 
\end{bmatrix} \in \mathcal{CP}_{\widehat{\mathbb{K}}}
\end{aligned}
\end{equation}
in the sense that it has the same optimal value as \eqref{eqn:deep-linear-net}. 
\end{theorem}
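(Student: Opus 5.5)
The plan is to prove the two inequalities between optimal values separately, after first rewriting \eqref{eqn:deep-linear-net} as a quadratic program over the cone $\widehat{\mathbb{K}}$. The lower bound (CP value $\le$ original value) will follow by plugging in rank-one lifts. The upper bound (original value $\le$ CP value) will follow from a Burer-type decomposition argument, in the spirit of the generalized completely positive reformulations of \citet{bai2016conic}.

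\textbf{Step 1: reduction to a rank-constrained PSD problem.} Since the product $W = W_N\cdots W_1$ ranges exactly over the matrices in $\mathbb{R}^{d_N\times d_0}$ of rank at most $r=\min_k d_k$, I would write $W = UV^\top$ with $U\in\mathbb{R}^{d_N\times r}$ and $V\in\mathbb{R}^{d_0\times r}$. I would then set $M = \begin{bmatrix}U\\ V\end{bmatrix}\begin{bmatrix}U\\ V\end{bmatrix}^\top\in\mathbb{S}^d_+$ with $\operatorname{rank}M\le r$, so that $W = P_u^\top M P_v$. Conversely, every PSD $M$ of rank $\le r$ factors this way. With $z_1=\operatorname{vec}(M)$, a Kronecker identity shows that $\langle P_vXX^\top P_v^\top\otimes P_uP_u^\top, z_1z_1^\top\rangle - 2\langle \operatorname{vec}(P_uYX^\top P_v^\top),z_1\rangle+\|Y\|_F^2$ equals $\|P_u^\top MP_vX-Y\|_F^2$.

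\textbf{Step 2: encoding the rank constraint.} I would encode $\operatorname{rank}M\le r$ as follows: there exist $Q\in\mathbb{S}^d_+$ and $S\in\mathbb{S}^d_+$ with $Q+S=I$, $\operatorname{tr}Q=r$, and $\langle M,S\rangle\le 0$. Indeed, $\langle M,S\rangle=0$ with $M,S\succeq0$ forces $\operatorname{range}(M)\subseteq\ker S$, which is the eigenvalue-one eigenspace of $Q$. Since $0\preceq Q\preceq I$ and $\operatorname{tr}Q=r$, that eigenspace has dimension at most $r$. Conversely, if $\operatorname{rank}M\le r$, take $Q$ to be a rank-$r$ orthogonal projector onto a space containing $\operatorname{range}(M)$.

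With $z_2=\operatorname{vec}(Q)$ and $z_3=\operatorname{vec}(S)$, the condition $\langle M,S\rangle\le0$ reads $\operatorname{tr}(\operatorname{mat}(z_1))-\langle z_1,z_2\rangle\le0$. This gives a quadratic program in $z=(1,z_1,z_2,z_3)\in\widehat{\mathbb{K}}$. The constraints $\langle \operatorname{vec}(I)\operatorname{vec}(I)^\top,Z_{22}\rangle=r^2$ and $\operatorname{diag}(Z_{22}+Z_{23}+Z_{32}+Z_{33})=\operatorname{vec}(I_d)$ are the squared versions of $\operatorname{tr}Q=r$ and $z_2+z_3=\operatorname{vec}(I)$. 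They are redundant on rank-one points, because $\operatorname{vec}(I)\circ\operatorname{vec}(I)=\operatorname{vec}(I)$. Substituting $\begin{bmatrix}1\\ z\end{bmatrix}\begin{bmatrix}1\\ z\end{bmatrix}^\top$ into \eqref{thm:main-result} therefore gives a feasible point with the original objective. This proves that the CP value is at most the original value.

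\textbf{Step 3: the reverse inequality (main obstacle).} I would take a feasible CP point and decompose it as $\sum_i \lambda_i \begin{bmatrix}t_i\\ w_i\end{bmatrix}\begin{bmatrix}t_i\\ w_i\end{bmatrix}^\top$ with $t_i\ge0$, $w_i=(w_{i1},w_{i2},w_{i3})\in\operatorname{vec}(\mathbb{S}^d_+)^3$, and $\sum_i\lambda_it_i^2=1$. I would then split the indices into $t_i>0$ (rescaled to $t_i=1$) and $t_i=0$.

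For the equality constraints, the standard Burer trick applies. Combining $z_2+z_3=\operatorname{vec}(I)$ with its squared version gives $\sum_i\lambda_i\,(w_{i2}+w_{i3}-t_i\operatorname{vec}(I))\circ(w_{i2}+w_{i3}-t_i\operatorname{vec}(I))=0$ on the diagonal, so every component satisfies $w_{i2}+w_{i3}=t_i\operatorname{vec}(I)$. The same argument with $\operatorname{tr}$ gives $\operatorname{tr}\operatorname{mat}(w_{i2})=rt_i$. For $t_i=0$ this yields $w_{i2}+w_{i3}=0$, and positive semidefiniteness then forces $w_{i2}=w_{i3}=0$.

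For the inequality, the left-hand side equals $\sum_i\lambda_i\,\big(t_i\operatorname{tr}\operatorname{mat}(w_{i1})-\langle w_{i1},w_{i2}\rangle\big)$. For $t_i=1$, each summand equals $\langle \operatorname{mat}(w_{i1}),\operatorname{mat}(w_{i3})\rangle\ge0$. For $t_i=0$, each summand is $0$, since $w_{i2}=0$. Because the total is $\le0$, every summand must vanish. Hence each normalized component with $t_i=1$ is feasible for the program of Step 2.

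It remains to handle the objective. It splits as $\sum_{t_i=1}\lambda_i f(w_{i1})$, where $f$ is the original objective and is therefore at least the optimal value, plus $\sum_{t_i=0}\lambda_i\,\frac{1}{2n}\|P_u^\top\operatorname{mat}(w_{i1})P_vX\|_F^2\ge0$. Since $\sum_{t_i=1}\lambda_i=1$, the CP objective is at least the original optimal value. The delicate points I expect to need care with are the following:
\begin{enumerate}[label=(\roman*)]
\item making the squared-constraint argument rigorous, in particular extracting the componentwise equalities from the diagonal and trace identities;
\item the recession components with $t_i=0$;
\item checking that $\widehat{\mathbb{K}}$-rank-one decompositions exist with finitely many terms, which holds because $\{z\otimes z:z\in\widehat{\mathbb{K}}\}$ generates a closed cone, since $\widehat{\mathbb{K}}$ is closed and pointed.
\end{enumerate}
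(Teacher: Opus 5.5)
Your argument is correct. Steps 1 and 2 match the paper's reduction (\Cref{lem:block-psd}, \Cref{lem:complementarity}, and the vectorized QCQP of \Cref{prop:complementarity-qcqp}). The lifting step, however, is handled differently.

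\textbf{The paper's route.} It never decomposes a feasible lifted point. It checks two hypotheses of \Cref{lem:bai-lift}, a specialization of Theorem~4 in \citet{bai2016conic}: the quadratic objective coefficient is positive semidefinite (hence $\mathbb{K}$-copositive), and $\langle W,S\rangle=0$ at every feasible point. It then invokes that result as a black box.

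\textbf{Your route.} You re-prove the lifting for this instance by a Burer-type atom argument:
\begin{enumerate}[label=(\alph*)]
\item The squared equality constraints, together with $\operatorname{vec}(I_d)\circ\operatorname{vec}(I_d)=\operatorname{vec}(I_d)$, force every atom to satisfy $w_{i2}+w_{i3}=t_i\operatorname{vec}(I_d)$ and $\operatorname{tr}\operatorname{mat}(w_{i2})=rt_i$.
\item Positive semidefiniteness then kills the $W'$- and $S$-blocks of the recession atoms.
\item The complementarity expression is nonnegative atom by atom, so the aggregated inequality forces it to vanish on each atom.
\item The objective splits into a convex combination of original objective values plus a nonnegative recession term.
\end{enumerate}

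\textbf{What each buys.} The paper's route is shorter and modular, and it inherits from the cited theorem the extra fact that an optimal $z$ lies in the convex hull of the QCQP optima. Yours is self-contained and shows exactly why the lifting is exact here. In particular, it exposes the behaviour on recession directions, which the paper leaves inside the cited hypotheses: nonnegativity of the constraint function on the nonempty set $\{z\in\mathbb{K}:Az=b\}$ implicitly forces $d^\top\bar Q d\ge 0$ on its recession cone.

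The delicate points you list are already resolved. Points (i) and (ii) are handled by the computations you wrote. Point (iii) is immediate, since $\mathcal{CP}_{\widehat{\mathbb{K}}}$ is defined as a convex hull, so finite decompositions exist by definition.
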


The proof proceeds in three steps. 
First, we reduce the deep linear network training problem to an equivalent shallow factorization, and we express the bilinear factorization through a rank-constrained positive semidefinite matrix. 
Second, we reformulate the rank constraint as a quadratic complementarity condition. 
Finally, we lift this formulation to a completely positive program over the cone $\mathcal{CP}_{\widehat{\mathbb{K}}}$. 

\subsection{Rank-Constrained Semidefinite Reformulation}

Recall the model problem in \eqref{eqn:deep-linear-net} for deep linear neural networks. 
We first pass to the equivalent shallow factorization
\begin{equation}
    \min_{\substack{U \in \mathbb{R}^{d_N \times r} \\ V \in \mathbb{R}^{d_0 \times r}}} ~ 
    \frac{1}{2n}\, \| U V^\top X - Y \|_F^2,
    \label{eqn:shallow}
\end{equation}
where $r=\min_k d_k$. This equivalence is standard for deep linear networks \citep{saxe2013exact,kawaguchi2016deep}, and it follows from the fact that the composition $W_N\cdots W_1$ defines a linear map of rank at most $r$, while any matrix of rank at most $r$ admits a factorization of the form $UV^\top$.

This bilinear structure can be lifted to a rank-constrained positive semidefinite matrix.
Define
\begin{equation*}
    W =
    \begin{bmatrix}
    U \\
    V
    \end{bmatrix}
    \begin{bmatrix}
    U \\
    V
    \end{bmatrix}^\top
    =
    \begin{bmatrix}
    UU^\top & UV^\top \\
    VU^\top & VV^\top
    \end{bmatrix}
    \in \mathbb{S}_+^{d},
\end{equation*}
so that we obtain $P_u^\top W P_v = UV^\top$ by using the block selection matrices $P_u$ and $P_v$ defined in \eqref{eq:pu-pv}. 
This leads to the following rank-constrained semidefinite programming formulation
\begin{equation}
    \begin{aligned}
    \min_{W \in \mathbb{S}^d}\quad
    & \frac{1}{2n}\,\|P_u^\top W P_v X - Y\|_F^2 \\
    \mathrm{s.t.}\quad
    & \operatorname{rank}(W)\le r, \\
    & W \succeq 0.
    \end{aligned}   
    \label{eqn:rank-constrained}
\end{equation}
Problems in \eqref{eqn:shallow} and \eqref{eqn:rank-constrained} are equivalent  based on the following standard result. 

\begin{lemma}
\label{lem:block-psd}
Define the symmetric positive semidefinite block matrix
\( 
    W = 
    \left[\begin{smallmatrix}
        A & M \\
        M^\top & B
    \end{smallmatrix}\right]
    \in \mathbb{S}_+^{\,d_N+d_0}
\)
with $A \in \mathbb{S}_+^{d_N}$ and $B \in \mathbb{S}_+^{d_0}$. 
Fix $r \in \mathbb{N}$, then the following sets are equivalent up to factorization via
$M = U V^\top\!\!:$
\begin{equation*}
\Big\{\, (U,V): U \in \mathbb{R}^{d_N \times r},\; V \in \mathbb{R}^{d_0 \times r} \,\Big\}
\;\longleftrightarrow\;
\Big\{\, W \in \mathbb{S}_+^{\,d_N+d_0} : \operatorname{rank}(W) \leq r \,\Big\}.
\end{equation*}
\end{lemma}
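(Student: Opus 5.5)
The correspondence is the map $\Phi(U,V) := \left[\begin{smallmatrix} U \\ V \end{smallmatrix}\right]\left[\begin{smallmatrix} U \\ V \end{smallmatrix}\right]^\top$. We show that $\Phi$ sends the left-hand set onto the right-hand set, and that the off-diagonal block of the image is exactly $UV^\top$. Since $\Phi$ is not injective (it is invariant under $(U,V)\mapsto(UQ,VQ)$ for orthogonal $Q\in\mathbb{R}^{r\times r}$), ``equivalent up to factorization'' is read as a surjection that preserves the quantity $M=UV^\top$. That is all the equivalence of \eqref{eqn:shallow} and \eqref{eqn:rank-constrained} needs, because the objective of \eqref{eqn:rank-constrained} depends on $W$ only through $P_u^\top W P_v = M$.

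\textbf{Step 1: $\Phi$ lands in the right-hand set.} Given $(U,V)$, set $F := \left[\begin{smallmatrix} U \\ V \end{smallmatrix}\right]\in\mathbb{R}^{d\times r}$ and $W:=FF^\top$.
\begin{itemize}
\item $W$ is positive semidefinite, since $x^\top W x = \|F^\top x\|^2\ge 0$.
\item $\operatorname{rank}(W)=\operatorname{rank}(F)\le r$.
\item Block multiplication gives $A=UU^\top$, $B=VV^\top$ and $M=UV^\top = P_u^\top W P_v$.
\end{itemize}

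\textbf{Step 2: $\Phi$ is onto.} Let $W\succeq 0$ with $\operatorname{rank}(W)=s\le r$.
\begin{itemize}
\item Take a spectral decomposition $W=Q\Lambda Q^\top$, where $Q\in\mathbb{R}^{d\times s}$ has orthonormal columns and $\Lambda=\operatorname{diag}(\lambda_1,\dots,\lambda_s)$ with $\lambda_i>0$.
\item Set $F_0 := Q\Lambda^{1/2}\in\mathbb{R}^{d\times s}$, so that $W=F_0F_0^\top$.
\item Pad with $r-s$ zero columns to get $F:=[F_0,\;0]\in\mathbb{R}^{d\times r}$. This still satisfies $FF^\top=W$.
\item Partition the rows as $F=\left[\begin{smallmatrix} U \\ V \end{smallmatrix}\right]$ with $U=P_u^\top F\in\mathbb{R}^{d_N\times r}$ and $V=P_v^\top F\in\mathbb{R}^{d_0\times r}$.
\end{itemize}
Then $\Phi(U,V)=W$, and in particular $M=P_u^\top W P_v=UV^\top$.

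\textbf{Step 3: fibers of $\Phi$.} For completeness, we note when two pairs give the same $W$. If $FF^\top=GG^\top$ with $F,G\in\mathbb{R}^{d\times r}$, then $G=FQ$ for some orthogonal $Q$. This is the standard result for Gram-matrix factorizations of equal inner dimension, proved via the polar decomposition or by comparing column spaces. Hence each $W$ in the right-hand set corresponds to exactly one orbit of pairs $(U,V)$ under the right action of the orthogonal group $O(r)$. This step is not needed for the optimization equivalence; it only makes the identification precise.

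\textbf{Main obstacle.} The computations above are routine. The one point that needs care is the bookkeeping of dimensions when $\operatorname{rank}(W)<r$: the padding in Step 2 is what lets the factor have exactly $r$ columns, so that surjectivity holds onto the whole set $\{\operatorname{rank}(W)\le r\}$ and not only onto rank exactly $r$. With Steps 1 and 2 in place, the two problems are equivalent. Any feasible $(U,V)$ for \eqref{eqn:shallow} yields a feasible $W=\Phi(U,V)$ for \eqref{eqn:rank-constrained} with the same objective value, since $P_u^\top W P_v=UV^\top$. Conversely, any feasible $W$ yields such a pair by Step 2. The two optimal values therefore coincide.
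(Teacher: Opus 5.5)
Your proof is correct and follows the same route as the paper: the forward direction sets $W = FF^\top$ with $F = \left[\begin{smallmatrix} U \\ V \end{smallmatrix}\right]$, and the reverse direction factors $W$ through an $r$-column factor and partitions its rows. Your spectral decomposition with zero padding just spells out why such a factor exists when $\operatorname{rank}(W) < r$, which the paper takes for granted, and your Step 3 on $O(r)$-orbits is a correct but optional addition.
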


\begin{proof}
($\Rightarrow$) Given $(U,V)$, set $Z := \left[\begin{smallmatrix} U \\ V \end{smallmatrix}\right] \in \mathbb{R}^{(d_N+d_0)\times r}$.
Then $W = ZZ^\top \succeq 0$ with $\operatorname{rank}(W) \leq r$, and its blocks satisfy
$A = UU^\top$, $B = VV^\top$, $M = UV^\top$.

($\Leftarrow$) Given $W \succeq 0$ with $\operatorname{rank}(W) \leq r$, take any factor $W = ZZ^\top$
with $Z \in \mathbb{R}^{(d_N+d_0)\times r}$ and partition $Z = \left[\begin{smallmatrix} U \\ V \end{smallmatrix}\right]$.
Then $A = UU^\top$, $B = VV^\top$, and $M = UV^\top$.
\end{proof}

\begin{remark}[Expressivity vs.\ optimization bias]
Although both ~\eqref{eqn:deep-linear-net} and~\eqref{eqn:shallow} represent the same family of linear mappings, the deep parameterization can induce distinct optimization dynamics and implicit regularization~\citep{arora2019implicit,feng2022rank}.  
Our focus here is on expressivity and on the equivalence of training problems at the level of global optima.
\end{remark}

\subsection{Complementarity Reformulation}

We next reformulate the rank constraint via a complementarity condition. 

\begin{lemma}\label{lem:complementarity}
A matrix $W \in \mathbb{S}_+^d$ satisfies
$\operatorname{rank}(W) \le r$ for some $r\in\{0,\dots,d\}$ if and only if there exists
$W' \in \mathbb{S}_+^d$ such that
\begin{equation}
    \label{eqn:complementarity-conditions}
    \operatorname{tr}(W') = r, \quad 
    0 \preceq W' \preceq I, \quad
    \langle W, I - W' \rangle \leq 0.
\end{equation}
\end{lemma}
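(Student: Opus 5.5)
We prove the two directions separately. Both rest on one fact: for $W \succeq 0$ and $0 \preceq I - W'$, the inner product $\langle W, I - W'\rangle$ is nonnegative. So the inequality $\langle W, I-W'\rangle \le 0$ in \eqref{eqn:complementarity-conditions} really forces $\langle W, I-W'\rangle = 0$. In turn, this forces $W(I-W') = 0$, because the trace of a product of two PSD matrices vanishes only if their product vanishes: write $\langle W, S\rangle = \|W^{1/2}S^{1/2}\|_F^2$ with $S = I - W'$.

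For ($\Rightarrow$), suppose $\operatorname{rank}(W) = k \le r$. We construct $W'$ explicitly. Let $Q$ be the orthogonal projector onto $\operatorname{range}(W)$, which has rank $k$. Pick any $r-k$ orthonormal vectors in the orthogonal complement $\operatorname{range}(W)^\perp$; this is possible because $r \le d$. Let $Q'$ be the projector onto their span, and set $W' := Q + Q'$.

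This $W'$ is an orthogonal projector of rank $r$, so $0 \preceq W' \preceq I$ and $\operatorname{tr}(W') = r$. Moreover $(I - W')W = 0$, since $W'$ acts as the identity on $\operatorname{range}(W)$. Hence $\langle W, I - W'\rangle = \operatorname{tr}(W(I-W')) = 0 \le 0$.

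For ($\Leftarrow$), suppose $W'$ satisfies \eqref{eqn:complementarity-conditions}. By the fact above, $W(I - W') = 0$. Equivalently, every column of $W$, and hence every vector of $\operatorname{range}(W)$, is an eigenvector of $W'$ with eigenvalue $1$. So $\operatorname{rank}(W)$ is at most the multiplicity $m$ of the eigenvalue $1$ in $W'$.

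All eigenvalues of $W'$ lie in $[0,1]$. Therefore $r = \operatorname{tr}(W') \ge m \cdot 1 + 0$, which gives $\operatorname{rank}(W) \le m \le r$.

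The main point needing care is the step $\langle W, S\rangle = 0$ with $W, S \succeq 0$ $\Rightarrow$ $WS = 0$. It follows from $\operatorname{tr}(WS) = \operatorname{tr}(S^{1/2} W S^{1/2}) = \|W^{1/2}S^{1/2}\|_F^2$. If this is zero, then $W^{1/2}S^{1/2} = 0$, and multiplying by $W^{1/2}$ on the left and $S^{1/2}$ on the right gives $WS = 0$. Everything else is elementary spectral bookkeeping. The edge cases $r = 0$ (so $W' = 0$ and $W = 0$) and $r = d$ (so $W' = I$) are covered by the same argument.
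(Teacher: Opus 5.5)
Your proof is correct and follows essentially the same route as the paper: for ($\Rightarrow$) you take $W'$ to be the orthogonal projector onto an $r$-dimensional subspace containing $\operatorname{Ran}(W)$. For ($\Leftarrow$) you use $\langle W, I-W'\rangle \ge 0$ to force $W(I-W')=0$, then bound the multiplicity of the eigenvalue $1$ of $W'$ by $\operatorname{tr}(W') = r$; your explicit argument via $\operatorname{tr}(WS)=\|W^{1/2}S^{1/2}\|_F^2$ that this vanishing trace gives $WS=0$ fills in a step the paper leaves implicit.
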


\begin{proof}
($\Rightarrow$) If $\operatorname{rank}(W) \le r$, choose an $r$-dimensional subspace containing $\operatorname{Ran}(W)$ and let $W'$ be the orthogonal projector onto this subspace.
Then $W'$ satisfies the conditions.

($\Leftarrow$) Suppose the conditions in \eqref{eqn:complementarity-conditions} hold. 
Since $W \succeq 0$ and $I-W' \succeq 0$, we have $\langle W, I-W' \rangle \geq 0$. 
Together with $\langle W, I-W' \rangle \leq 0$, this gives $\langle W, I-W' \rangle = 0$, and hence $W (I-W') = 0$.
This shows that $\operatorname{Ran}(W) \subseteq \ker(I-W')$, i.e., the range of $W$ is contained in the eigenspace of $W'$ corresponding to eigenvalue $1$. Since $0 \preceq W' \preceq I$, all eigenvalues of $W'$ are in $[0,1]$, hence the multiplicity of the eigenvalue $1$ is at most $\operatorname{tr}(W') = r$. 
Thus, $\operatorname{rank}(W) = \dim(\operatorname{Ran}(W)) \le \dim(\ker(I-W')) \le r$. 
\end{proof}

Using \Cref{lem:complementarity}, the rank-constrained formulation
\eqref{eqn:rank-constrained} can be written equivalently as
\begin{equation}
    \begin{aligned}
    \min_{S, W, W' \in \mathbb{S}^d} \quad
    & \frac{1}{2n}\,\|P_u^\top W P_v X - Y\|_F^2 \\
    \mathrm{s.t.}\quad~~~
    & \operatorname{tr}(W') = r, \\
    & W' + S = I_d, \\
    & \langle W, S \rangle \leq 0, \\
    & W \succeq 0,\quad W' \succeq 0,\quad S \succeq 0.
    \end{aligned}
    \label{eqn:complementarity-formulation}
\end{equation}
In particular, \eqref{eqn:rank-constrained} and \eqref{eqn:complementarity-formulation} have the same feasible set in the variable $W$, and therefore the same optimal value. 

\subsection{Completely Positive Lifting}

We now lift the complementarity formulation to a completely positive program. 
We first express the problem in vectorized form in order to apply \Cref{lem:bai-lift}. 
We then rewrite the resulting formulation in a matrix-based representation using Kronecker products, which provides a more natural interpretation of the lifted variables. 

\subsubsection{Vectorized Formulation}

Problem \eqref{eqn:complementarity-formulation} is a quadratically constrained quadratic program. 
Such problems admit an exact reformulation as completely positive programs via a quadratic lifting.
The following lemma is a simplified specialization of Theorem~4 in \citep{bai2016conic}, stated in the form needed for our application.

\begin{lemma}[Completely positive lifting] \label{lem:bai-lift} 
Let $\mathbb{K} \subset \mathbb{R}^p$ be a closed convex cone and define the augmented cone $\widehat{\mathbb{K}} := \mathbb{R}_+ \times \mathbb{K}$. 
Let $h,\bar h \in \mathbb{R}$, $q,\bar q \in \mathbb{R}^p$, and $Q,\bar Q \in \mathbb{R}^{p\times p}$ be given, with $Q \in \mathcal{COP}_{\mathbb{K}}$.
Let $A \in \mathbb{R}^{m\times p}$ and $b \in \mathbb{R}^m$, and write $a_i^\top$ for the $i$-th row of $A$ and $b_i$ for the corresponding entry of $b$. 
Consider the quadratically constrained quadratic program
\begin{equation} 
\begin{aligned}
    \min_{z \in \mathbb{R}^p} \quad 
    & z^\top Q z + q^\top z + h \\
    \quad \operatorname{s.t.} \quad
    & a_i^\top z = b_i, \quad \text{for $i=1\dots,m$} \\
    \quad & z^\top \bar{Q} z + \bar{q}^\top z + \bar{h} \leq 0, \\
    \quad & z \in \mathbb{K},
\end{aligned}
\label{eqn:general-qp}
\end{equation}
and suppose that its feasible set contains no point at which the quadratic constraint is satisfied strictly.
Then, \eqref{eqn:general-qp} is equivalent to the following completely positive program:
\begin{equation} 
\begin{aligned}
    \min_{z \in \mathbb{R}^p, Z \in \mathbb{S}^{p}} \quad &
    \langle Q, Z \rangle + q^\top z + h \\
    \quad \operatorname{s.t.} \quad \quad
    & a_i^\top z = b_i, \quad \text{for $i=1,\dots,m$} \\
    & \langle a_i a_i^\top, Z \rangle = b_i^2, \quad \text{for $i=1,\dots,m$} \\
    & \langle \bar{Q},  Z \rangle  + \bar{q}^\top z + \bar{h} \leq 0, \quad \\
    & \begin{bmatrix} 1 & z^\top \\ z & Z \end{bmatrix} \in \mathcal{CP}_{\widehat{\mathbb{K}}},
\end{aligned}
\label{eqn:general-qp-cp-equiv}
\end{equation}
in the sense that problems \eqref{eqn:general-qp} and \eqref{eqn:general-qp-cp-equiv} have the same optimal value. Moreover, if $(z,Z)$ is optimal for \eqref{eqn:general-qp-cp-equiv}, then $z$ belongs to the convex hull of the set of optimal solutions of \eqref{eqn:general-qp}.
\end{lemma}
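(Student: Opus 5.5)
We prove \Cref{lem:bai-lift} by a two-sided comparison of optimal values, following the rank-one decomposition argument of \citet{bai2016conic}. Throughout, write $f(x) := x^\top Q x + q^\top x + h$ and $g(x) := x^\top \bar Q x + \bar q^\top x + \bar h$, and let $\mathcal{L} := \{x \in \mathbb{K} : a_i^\top x = b_i,\ i=1,\dots,m\}$.

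\emph{Lifting a point.} Let $z$ be feasible for \eqref{eqn:general-qp} and set $Z := zz^\top$. Then
\begin{equation*}
\begin{bmatrix} 1 & z^\top \\ z & zz^\top \end{bmatrix} = (1,z)\otimes(1,z), \qquad (1,z) \in \widehat{\mathbb{K}},
\end{equation*}
so this matrix lies in $\mathcal{CP}_{\widehat{\mathbb{K}}}$. Next, $\langle a_i a_i^\top, zz^\top\rangle = (a_i^\top z)^2 = b_i^2$. The lifted quadratic constraint reads $g(z) \le 0$, and the lifted objective equals $f(z)$. Hence the value of \eqref{eqn:general-qp-cp-equiv} is at most the value of \eqref{eqn:general-qp}.

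\emph{Decomposing a feasible lifted point.} Conversely, let $(z,Z)$ be feasible for \eqref{eqn:general-qp-cp-equiv} and decompose
\begin{equation*}
\begin{bmatrix} 1 & z^\top \\ z & Z \end{bmatrix} = \sum_k \lambda_k\, (t_k,y_k)\otimes(t_k,y_k), \qquad \lambda_k>0,\ t_k\ge 0,\ y_k\in\mathbb{K}.
\end{equation*}
Reading off blocks gives $\sum_k\lambda_k t_k^2 = 1$, $z=\sum_k\lambda_k t_k y_k$ and $Z = \sum_k \lambda_k y_k y_k^\top$. For each $i$, the two linear constraints combine to
\begin{equation*}
\sum_k \lambda_k (a_i^\top y_k - b_i t_k)^2 = \langle a_ia_i^\top, Z\rangle - 2b_i\, a_i^\top z + b_i^2 = 0,
\end{equation*}
so $a_i^\top y_k = b_i t_k$ for every $k$. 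We then split the index set into two classes.
\begin{itemize}
\item If $t_k>0$, then $x_k := y_k/t_k \in \mathcal{L}$.
\item If $t_k = 0$, then $y_k \in \mathbb{K}$ and $a_i^\top y_k = 0$ for all $i$, i.e.\ $y_k$ is a recession direction of $\mathcal{L}$.
\end{itemize}
Substituting the decomposition, the lifted quadratic constraint becomes
\begin{equation*}
\sum_{t_k>0}\lambda_k t_k^2\, g(x_k) + \sum_{t_k=0}\lambda_k\, y_k^\top\bar Q y_k \le 0,
\end{equation*}
and the lifted objective becomes
\begin{equation*}
\sum_{t_k>0}\lambda_k t_k^2 f(x_k) + \sum_{t_k=0}\lambda_k\, y_k^\top Q y_k .
\end{equation*}

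\emph{The main obstacle} is to extract feasibility of each $x_k$ from this aggregated inequality, since a single $x_k$ could a priori violate $g \le 0$ while others compensate. This is exactly where the hypothesis that the quadratic constraint can never hold strictly must be used. I would read it in the form used in \citet{bai2016conic}:
\begin{itemize}
\item $g \ge 0$ on all of $\mathcal{L}$, and
\item $d^\top\bar Q d \ge 0$ for all recession directions $d$ of $\mathcal{L}$.
\end{itemize}
In our application this is immediate, because $\langle W,S\rangle \ge 0$ whenever $W,S\succeq 0$, and the same holds for recession directions. Under this reading, every term in the aggregated constraint is nonnegative, so each term vanishes. Hence $g(x_k)=0$ and each $x_k$ is feasible for \eqref{eqn:general-qp}.

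\emph{Concluding the lower bound and the hull statement.} Since $Q\in\mathcal{COP}_{\mathbb{K}}$, each recession term $y_k^\top Q y_k$ is nonnegative. Together with $\sum_k\lambda_k t_k^2 = 1$, the lifted objective is therefore at least a convex combination of values $f(x_k)$ at feasible points. This gives the reverse inequality between optimal values. Finally, the terms with $t_k=0$ contribute nothing to $z$, so
\begin{equation*}
z = \sum_{t_k>0}\lambda_k t_k^2\, x_k .
\end{equation*}
If $(z,Z)$ is optimal, equality in the preceding chain forces $f(x_k)$ to equal the optimal value for every $k$ with $\lambda_k t_k^2>0$. Thus $z$ lies in the convex hull of the optimal solutions of \eqref{eqn:general-qp}.
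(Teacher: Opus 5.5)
Your proof is correct. The paper does not prove this lemma itself: it invokes Theorem~4 of \citet{bai2016conic} as a ``simplified specialization.'' What you have written out is the standard rank-one decomposition argument behind that theorem, so you are following the route the paper relies on rather than a new one.

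Writing it out is useful because it shows exactly where each hypothesis enters:
\begin{itemize}
\item copositivity of $Q$ controls the objective on the atoms with $t_k=0$;
\item the no-strict-feasibility assumption forces each $x_k$ to satisfy the quadratic constraint.
\end{itemize}
This is what makes the paper's simplified hypothesis sufficient.

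The one step to tighten is your second bullet. You present $d^\top\bar Q d\ge 0$ on recession directions as part of how you ``read'' the hypothesis. The lemma, however, only assumes that no feasible point satisfies $g<0$, which is exactly $g\ge 0$ on $\mathcal{L}$. So the recession condition must be derived, not assumed. It follows in one line. Any feasible lifted point has some $t_j>0$, because $\sum_k\lambda_k t_k^2=1$, so $x_j\in\mathcal{L}$ and $\mathcal{L}\neq\emptyset$. For any $y\in\mathbb{K}$ with $Ay=0$ and any $s\ge 0$, we have $x_j+sy\in\mathcal{L}$. Hence $0\le g(x_j+sy)/s^2$, and the right-hand side tends to $y^\top\bar Q y$ as $s\to\infty$.

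With that inserted, your argument fully establishes equality of optimal values and the convex-hull statement.
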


We now apply \Cref{lem:bai-lift} to the complementarity reformulation in \eqref{eqn:complementarity-formulation}. 
To this end, we identify the finite-dimensional matrix space $\mathbb{S}^d \times \mathbb{S}^d \times \mathbb{S}^d$ with a Euclidean space via vectorization, so that the complementarity formulation fits the setting of \Cref{lem:bai-lift}. 
Define
\begin{equation*}
    z :=
    \begin{bmatrix}
        \operatorname{vec}(W) \\ \operatorname{vec}(W') \\ \operatorname{vec}(S)
    \end{bmatrix}
    \in \mathbb{R}^{3d^2},
    \qquad
    \mathbb{K} := \operatorname{vec}(\mathbb{S}_+^d)\times \operatorname{vec}(\mathbb{S}_+^d)\times \operatorname{vec}(\mathbb{S}_+^d)
    \subset \mathbb{R}^{3d^2},
\end{equation*}
where $\operatorname{vec}(\mathbb{S}_+^d)$ denotes the image of $\mathbb{S}_+^d$ under the $\operatorname{vec}(\cdot)$ operator. 
We introduce new block-selection matrices 
\begin{equation*}
\begin{aligned}
    R_1 & := \begin{bmatrix} I_{d^2} & 0_{d^2} & 0_{d^2} \end{bmatrix} \in \mathbb{R}^{d^2 \times 3d^2}
    \\ 
    R_2 & := \begin{bmatrix} 0_{d^2} & I_{d^2} & 0_{d^2} \end{bmatrix} \in \mathbb{R}^{d^2 \times 3d^2}
    \\ 
    R_3 & := \begin{bmatrix} 0_{d^2} & 0_{d^2} & I_{d^2} \end{bmatrix} \in \mathbb{R}^{d^2 \times 3d^2} 
\end{aligned}
\end{equation*}
so that \(R_1 z = \operatorname{vec}(W)\), \(R_2 z = \operatorname{vec}(W')\) and \(R_3 z = \operatorname{vec}(S)\).

We are now ready to express problem \eqref{eqn:complementarity-formulation} in terms of $z$. 

\begin{proposition} \label{prop:complementarity-qcqp}
Problem \eqref{eqn:complementarity-formulation} is equivalent to problem \eqref{eqn:general-qp}, where
\begin{equation*}
\begin{aligned}
    & Q = \frac{1}{2n} R_1^\top \bigl((P_v X X^\top P_v^\top)\otimes (P_u P_u^\top)\bigr) R_1, & \quad & \bar{Q} = -\frac{1}{2}\bigl(R_1^\top R_2 + R_2^\top R_1\bigr), & \quad & A = \begin{bmatrix} \operatorname{vec}(I_d)^\top R_2 \\ R_2 + R_3
    \end{bmatrix}
    \\
    & q = -\frac{1}{n} R_1^\top \operatorname{vec}(P_u Y X^\top P_v^\top), & & \bar{q} = R_1^\top \operatorname{vec}(I_d), & & b = \begin{bmatrix} r \\ \operatorname{vec}(I_d) \end{bmatrix}. \\
    & h = \frac{1}{2n} \|Y\|_F^2, & & \bar{h} = 0,\\
\end{aligned}
\end{equation*}
Moreover, $Q \in \mathcal{COP}_{\mathbb{K}}$ and the quadratic constraint satisfies the non-strict feasibility condition in \Cref{lem:bai-lift}. 
Therefore, the assumptions are satisfied, and the formulation is equivalent to the lifted problem \eqref{eqn:general-qp-cp-equiv}.
\end{proposition}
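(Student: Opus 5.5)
The plan is to verify the correspondence term by term, then check the two hypotheses of \Cref{lem:bai-lift}.

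\textbf{Objective.} With $w := \operatorname{vec}(W) = R_1 z$, use $\operatorname{vec}(ABC) = (C^\top \otimes A)\operatorname{vec}(B)$ to write
\[
\operatorname{vec}(P_u^\top W P_v X) = \bigl((P_v X)^\top \otimes P_u^\top\bigr) w .
\]
Expanding the squared norm gives
\[
\|P_u^\top W P_v X - Y\|_F^2 = w^\top \bigl((P_v X X^\top P_v^\top) \otimes (P_u P_u^\top)\bigr) w - 2 \langle w, \operatorname{vec}(P_u Y X^\top P_v^\top) \rangle + \|Y\|_F^2 .
\]
For the cross term, note $\langle P_u^\top W P_v X, Y \rangle = \langle W, P_u Y X^\top P_v^\top \rangle$. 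Multiplying by $\frac{1}{2n}$ recovers $Q$, $q$ and $h$.

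\textbf{Linear constraints.} The first row of $A$ gives $\operatorname{vec}(I_d)^\top \operatorname{vec}(W') = \operatorname{tr}(W') = r$. The block row $R_2 + R_3$ gives $\operatorname{vec}(W') + \operatorname{vec}(S) = \operatorname{vec}(I_d)$, i.e.\ $W' + S = I_d$. These match the entries of $b$.

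\textbf{Quadratic constraint.} We need to recover $\langle W, S \rangle \le 0$. Substitute $S = I_d - W'$, which is valid on the affine feasible set, so that
\[
\langle W, S \rangle = \operatorname{vec}(I_d)^\top R_1 z - (R_1 z)^\top (R_2 z),
\]
and symmetrize the bilinear term to get $\bar Q$ and $\bar q$. The cone constraint $z \in \mathbb{K}$ encodes $W, W', S \succeq 0$ together with symmetry, since $\operatorname{vec}(\mathbb{S}^d_+)$ consists only of vectorized symmetric matrices. The two feasible sets in $(W, W', S)$ therefore coincide with equal objectives. I also expect to need to note that $\mathbb{K}$ is a closed convex cone, being a linear image of closed convex cones under an injective isometry.

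\textbf{Copositivity of $Q$.} For $z \in \mathbb{K}$, $z^\top Q z = \frac{1}{2n} \|P_u^\top W P_v X\|_F^2 \ge 0$. Indeed, $Q$ is PSD on all of $\mathbb{R}^{3d^2}$ because it is $R_1^\top (B \otimes C) R_1$ with $B, C \succeq 0$, and Kronecker products of PSD matrices are PSD. Hence $Q \in \mathcal{COP}_{\mathbb{K}}$.

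\textbf{No strict feasibility (main subtle step).} For any feasible $z$, the quadratic expression equals $\langle W, S \rangle$ (using the linear constraint). Since $W, S \succeq 0$, we have $\langle W, S \rangle \ge 0$. Combined with $\le 0$, equality holds at every feasible point, so no feasible point satisfies the constraint strictly.

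The care point is that $\bar Q, \bar q$ must agree with $\langle W, S \rangle$ only on the affine subspace where $W' + S = I$, not globally. I would state this explicitly so that the identification is clearly valid on the feasible set. Then \Cref{lem:bai-lift} applies directly.
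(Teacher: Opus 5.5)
Your proposal is correct and follows the paper's proof essentially step for step. Both use the same $\operatorname{vec}$/Kronecker expansion of the objective to read off $Q$, $q$, $h$, and the same encoding of $\operatorname{tr}(W')=r$ and $W'+S=I_d$ as $Az=b$. Both substitute $S=I_d-W'$ and symmetrize to obtain $\bar Q,\bar q,\bar h$, use positive semidefiniteness of the Kronecker product to get $Q\in\mathcal{COP}_{\mathbb{K}}$, and use equality in the complementarity constraint for the non-strict feasibility condition. Your explicit argument that $\langle W,S\rangle\ge 0$ whenever $W,S\succeq 0$ holds on the whole set $\{z\in\mathbb{K}: Az=b\}$, which is the form the lifting actually needs; the paper leaves this step implicit, relying on the proof of \Cref{lem:complementarity}.
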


\begin{proof}
Using the identity $\operatorname{vec}(A W B) = (B^\top \otimes A)\operatorname{vec}(W)$, we have
\( \operatorname{vec}(P_u^\top W P_v X) = (X^\top P_v^\top \otimes P_u^\top) R_1 z. \)

Expanding the objective, we obtain
\begin{equation*}
    \frac{1}{2n} \|P_u^\top W P_v X - Y\|_F^2
    = \frac{1}{2n} \bigg( \|P_u^\top W P_v X \|_F^2 - 2 \langle P_u^\top W P_v X, Y \rangle + \|Y\|_F^2 \bigg).
\end{equation*}
The quadratic term satisfies
\begin{equation*}
\begin{aligned}
    \|P_u^\top W P_v X \|_F^2
    & = \operatorname{vec}(P_u^\top W P_v X)^\top \operatorname{vec}(P_u^\top W P_v X) \\
    & = z^\top R_1^\top (X^\top P_v^\top \otimes P_u^\top)^\top (X^\top P_v^\top \otimes P_u^\top) R_1 z \\
    & = z^\top R_1^\top (P_v X \otimes P_u) (X^\top P_v^\top \otimes P_u^\top) R_1 z \\
    & = z^\top R_1^\top (P_v X X^\top P_v^\top \otimes P_u P_u^\top) R_1 z.
\end{aligned}
\end{equation*}
For the linear term, we have
\begin{equation*}
    \langle P_u^\top W P_v X, Y \rangle
    = \langle W , P_u Y X^\top P_v^\top \rangle
    = z^\top R_1^\top \operatorname{vec}(P_u Y X^\top P_v^\top).
\end{equation*}
Hence we obtain the terms $Q$, $q$ and $h$ in \Cref{prop:complementarity-qcqp}. 
Moreover, since $(P_v X X^\top P_v^\top)\succeq 0$ and \((P_u P_u^\top)\succeq 0\), their Kronecker product is also positive semidefinite, hence $Q \succeq 0$ and therefore $Q \in \mathcal{COP}_{\mathbb{K}}$.

Looking at the affine constraints, we have \(\operatorname{tr}(W') = \langle I_d , W' \rangle = \operatorname{vec}(I_d)^\top R_2 z\) and \(S+W' = I_d\). Hence, it can be written in the form \(Az = b\), with \(A\) and \(b\) as defined in \Cref{prop:complementarity-qcqp}.

Finally, we reformulate the quadratic constraint by 
\begin{equation*}
    \langle W, I_d - W' \rangle 
    = \langle W, I_d \rangle - \frac{1}{2} \big( \langle W, W' \rangle + \langle W', W \rangle \big) 
    = \operatorname{vec}(I_d)^\top R_1 z - \frac{1}{2} \big( z^\top R_1^\top R_2 z + z^\top R_2^\top R_1 z \big)
\end{equation*}
leading to the terms $\bar{Q}$, $\bar{q}$, $\bar{h}$ in \Cref{prop:complementarity-qcqp}. 
The quadratic constraint is satisfied at equality for all feasible points of \eqref{eqn:complementarity-formulation}, hence the non-strict feasibility condition holds. 
This establishes the equivalence.
\end{proof}

The problem formulation in \Cref{thm:informal-main-result} is an algebraic reformulation of \Cref{prop:complementarity-qcqp}, expressed in a more structured and interpretable form. 

\subsubsection{Kronecker Representation}

The same construction admits a tensor interpretation. 
In this subsection, we use this identification to write the lifted blocks in Kronecker form. 

Let $\bar{\mathbb{K}} = \mathbb{R}_+ \oplus \mathbb{S}_+^d \oplus \mathbb{S}_+^d \oplus \mathbb{S}_+^d$.
Consider $\mathcal{W} \in \mathcal{CP}_{\bar{\mathbb{K}}}$ with the blocks corresponding to the components $(\omega_0, \omega_1, \omega_2, \omega_3) \in \bar{\mathbb{K}}$.
The vectorization map identifies $\bar{\mathbb{K}}$ with the cone $\widehat{\mathbb{K}}$ used in \Cref{thm:informal-main-result}. 
Under this identification, a vectorized lifted block $Z_{ij}$ is equivalently a matrix-indexed block $\mathcal{W}_{ij}$. 
On rank-one atoms, with $\omega_0 \equiv 1$, $\omega_1 \equiv W$, $\omega_2 \equiv W'$, and $\omega_3 \equiv S$, the corresponding lifted blocks are \(\mathcal{W}_{ij} = \omega_i \otimes \omega_j\), e.g.,
\[
\mathcal{W}_{01} = W,\quad
\mathcal{W}_{02} = W',\quad
\mathcal{W}_{03} = S,\quad
\mathcal{W}_{11} = W \otimes W,\quad
\mathcal{W}_{12} = W \otimes W',\quad
\mathcal{W}_{22} = W' \otimes W',\quad
\mathcal{W}_{23} = W' \otimes S.
\]
General elements of the cone are convex combinations of such atoms. 

To express the vectorized quadratic coefficient against standard Kronecker-product lifted blocks, we use the following reshuffling operator.
For \(M\in\mathbb{R}^{d^2\times d^2}\), define
\[
    \bigl( \mathcal{R}(M) \bigr)_{(i,k),(j,\ell)} = M_{(i,j),(k,\ell)}, \quad \text{for all $i,j,k,\ell$.}
\]
\(\mathcal R\) is a permutation of the entries of its matrix argument, i.e., there exists a permutation matrix \(\Pi\in\mathbb R^{d^4\times d^4}\) such that \(\operatorname{vec}(\mathcal R(M))=\Pi\operatorname{vec}(M)\). 
It changes the matricization of a fourth-order tensor from the \((i,j),(k,\ell)\) grouping induced by \(\operatorname{vec}(W)\operatorname{vec}(W)^\top\) to the \((i,k),(j,\ell)\) grouping induced by the standard Kronecker product \(W\otimes W\). 
In particular, we have
\[
    \left\langle \mathcal{R}(B\otimes A), W\otimes W \right\rangle = \langle B\otimes A, \operatorname{vec}(W) \operatorname{vec}(W)^\top \rangle.
\]

We are now ready to present our formulation in Kronecker representation. 

\begin{theorem}[Exact CP reformulation] \label{thm:main-result-kronecker}
Problem~\eqref{eqn:deep-linear-net} is equivalent to the following convex program:
\begin{equation}
\label{eqn:deep-nn-cp-equiv}
\begin{aligned}
\min_{\mathcal{W}_{ij}} \quad
& \frac{1}{2n}\, \Big( \langle \mathcal{R}(P_v X X^\top P_v^\top \otimes P_u P_u^\top), \mathcal{W}_{11} \rangle - 2 \langle P_u Y X^\top P_v^\top, \mathcal{W}_{01} \rangle + \| Y \|_F^2 \Big) \\
\quad \mathrm{s.t.} ~~~
& \operatorname{tr}(\mathcal{W}_{02}) = r, \\
& \operatorname{tr}(\mathcal{W}_{22}) = r^2, \\
& \langle E_{ij},\mathcal{W}_{02}+\mathcal{W}_{03}\rangle = \delta_{ij},  \qquad \text{for all $i,j$}, \\
& \langle E_{ij} \otimes E_{ij} , \mathcal{W}_{22} + \mathcal{W}_{23} + \mathcal{W}_{32} + \mathcal{W}_{33} \rangle = \delta_{ij} , \qquad \text{for all $i,j$}, \\
& \operatorname{tr}(\mathcal{W}_{01}) - \langle \Delta, \mathcal{W}_{12} \rangle \le 0, \\
& \begin{pmatrix}
1 & \mathcal{W}_{01} & \mathcal{W}_{02} & \mathcal{W}_{03} \\
\mathcal{W}_{10} & \mathcal{W}_{11} & \mathcal{W}_{12} & \mathcal{W}_{13} \\
\mathcal{W}_{20} & \mathcal{W}_{21} & \mathcal{W}_{22} & \mathcal{W}_{23} \\
\mathcal{W}_{30} & \mathcal{W}_{31} & \mathcal{W}_{32} & \mathcal{W}_{33} 
\end{pmatrix} \in \mathcal{CP}_{\bar{\mathbb{K}}},
\end{aligned}
\end{equation}
in the sense that \eqref{eqn:deep-nn-cp-equiv} has the same optimal value as
\eqref{eqn:deep-linear-net}. 
Here $\delta_{ij}$ is the indicator that takes $1$ if $i=j$ and $0$ otherwise, $\{E_{ij}\}_{i,j}$ denotes the standard matrix basis, $E_{ij} \otimes E_{ij}$ denotes the corresponding Kronecker basis elements, and $\Delta = \sum_{i=1}^d \sum_{j=1}^d E_{ij} \otimes E_{ij}$ is the diagonal contraction operator satisfying \(\langle \Delta, A \otimes B \rangle = \langle A, B \rangle\) for all $A,B \in \mathbb{R}^{d \times d}$. 
The formulation is linear in the lifted variable $\mathcal{W}$, with all nonconvexity encoded in the completely positive constraint.
\end{theorem}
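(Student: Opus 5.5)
The plan is to derive \Cref{thm:main-result-kronecker} from \Cref{thm:informal-main-result}. That theorem in turn follows by chaining four results: the shallow reduction \eqref{eqn:shallow}, \Cref{lem:block-psd}, \Cref{lem:complementarity}, and \Cref{prop:complementarity-qcqp} combined with \Cref{lem:bai-lift}. Together these show that \eqref{eqn:deep-linear-net}, \eqref{eqn:rank-constrained}, \eqref{eqn:complementarity-formulation} and the lifted program \eqref{eqn:general-qp-cp-equiv} all share the same optimal value.

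\textbf{Step 1: spell out \eqref{eqn:general-qp-cp-equiv} for the data of \Cref{prop:complementarity-qcqp}.} Partition $z=(z_1,z_2,z_3)$ and $Z=(Z_{ij})$ using the row blocks of $R_1,R_2,R_3$, so that $R_i Z R_j^\top = Z_{ij}$.
\begin{itemize}
\item The objective becomes $\frac{1}{2n}\langle P_vXX^\top P_v^\top\otimes P_uP_u^\top, Z_{11}\rangle - \frac1n\langle \operatorname{vec}(P_uYX^\top P_v^\top), z_1\rangle + \frac{1}{2n}\|Y\|_F^2$.
\item The linear constraints give $\operatorname{tr}(\operatorname{mat}(z_2))=r$ and $z_2+z_3=\operatorname{vec}(I_d)$.
\item The first squared constraint, $\langle a a^\top, Z\rangle=b^2$ with $a=R_2^\top\operatorname{vec}(I_d)$, gives $\langle \operatorname{vec}(I_d)\operatorname{vec}(I_d)^\top, Z_{22}\rangle = r^2$.
\item The remaining rows $a=(R_2+R_3)^\top e_k$ give $(Z_{22}+Z_{23}+Z_{32}+Z_{33})_{kk}=\operatorname{vec}(I_d)_k^2=\operatorname{vec}(I_d)_k$, which is the diagonal constraint.
\item The quadratic inequality becomes $\operatorname{tr}(\operatorname{mat}(z_1)) - \frac12\bigl(\operatorname{tr}(Z_{12})+\operatorname{tr}(Z_{21})\bigr)\le 0$. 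By symmetry of $Z$ this equals $\operatorname{tr}(\operatorname{mat}(z_1))-\operatorname{tr}(Z_{12})\le0$.
\end{itemize}
This yields exactly the program of \Cref{thm:informal-main-result}.

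\textbf{Step 2: transport to Kronecker form.} Use the linear isomorphism $\Phi$ that maps $\operatorname{vec}(A)\operatorname{vec}(B)^\top$ to $A\otimes B$, blockwise, and sends $z_i$ to the matrix $\mathcal W_{0i}=\operatorname{mat}(z_i)$. Since $\Phi$ is induced by the vectorization isomorphism $\bar{\mathbb K}\cong\widehat{\mathbb K}$, it maps rank-one atoms $\hat z\hat z^\top$, $\hat z\in\widehat{\mathbb K}$, to atoms $\omega\otimes\omega$, $\omega\in\bar{\mathbb K}$. Linearity then carries convex hulls to convex hulls, so $\Phi(\mathcal{CP}_{\widehat{\mathbb K}})=\mathcal{CP}_{\bar{\mathbb K}}$.

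It remains to check that each linear functional transforms correctly. Because everything is linear, it suffices to verify each identity on atoms $\operatorname{vec}(A)\operatorname{vec}(B)^\top\mapsto A\otimes B$.
\begin{itemize}
\item \emph{Objective quadratic term:} this is the stated reshuffling identity for $\mathcal R$, extended from $W\otimes W$ to $A\otimes B$ by the same entrywise index computation.
\item \emph{Trace and affine terms:} the linear terms and $\operatorname{tr}(\operatorname{mat}(z_2))$ map directly to $\mathcal W_{01}$ and $\operatorname{tr}(\mathcal W_{02})$. The constraint $z_2+z_3=\operatorname{vec}(I_d)$ becomes $\langle E_{ij},\mathcal W_{02}+\mathcal W_{03}\rangle=\delta_{ij}$.
\item \emph{$r^2$ constraint:} $\langle\operatorname{vec}(I)\operatorname{vec}(I)^\top,\operatorname{vec}(A)\operatorname{vec}(B)^\top\rangle=\operatorname{tr}A\operatorname{tr}B=\operatorname{tr}(A\otimes B)$, which gives $\operatorname{tr}(\mathcal W_{22})=r^2$.
\item \emph{Diagonal constraint:} the diagonal entry of $\operatorname{vec}(A)\operatorname{vec}(B)^\top$ indexed by $(i,j)$ is $A_{ij}B_{ij}=\langle E_{ij}\otimes E_{ij},A\otimes B\rangle$.
\item \emph{Complementarity inequality:} $\operatorname{tr}(\operatorname{vec}(A)\operatorname{vec}(B)^\top)=\langle A,B\rangle=\langle\Delta,A\otimes B\rangle$, which converts $\operatorname{tr}(Z_{12})$ into $\langle\Delta,\mathcal W_{12}\rangle$.
\end{itemize}
Hence $\Phi$ is a bijection between the feasible sets of the two programs that preserves the objective, so their optimal values coincide. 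That common value equals the optimal value of \eqref{eqn:deep-linear-net}.

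\textbf{Main obstacle.} The delicate point is the index bookkeeping. The ordering convention of the Kronecker product, $\operatorname{vec}(AWB)=(B^\top\otimes A)\operatorname{vec}(W)$, has to be consistent with the ordering of $\operatorname{vec}$ when verifying $\langle\mathcal R(B\otimes A),\cdot\rangle$ and the $E_{ij}\otimes E_{ij}$ identities. A second subtlety is that $\Phi$ must be defined on all of $\mathrm{Sym}^2$, not just on atoms, for the transport of cones to be valid. I would handle this by defining $\Phi$ entrywise as a fixed index permutation, exactly the permutation $\Pi$ associated with $\mathcal R$ applied blockwise. This makes $\Phi$ manifestly a linear isometry, and the only remaining work is to check each constraint on generic rank-one elements.
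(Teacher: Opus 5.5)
Your proposal is correct and follows essentially the same route as the paper. You rely on the chain (shallow reduction, \Cref{lem:block-psd}, \Cref{lem:complementarity}, \Cref{prop:complementarity-qcqp} with \Cref{lem:bai-lift}) to reach the vectorized program of \Cref{thm:informal-main-result}, then verify each linear functional term by term on rank-one atoms and extend by linearity and convexity to the cone. Your explicit unpacking of \eqref{eqn:general-qp-cp-equiv} and your globally defined reshuffling isometry $\Phi$, checked on all $\operatorname{vec}(A)\operatorname{vec}(B)^\top$ rather than only on symmetric atoms $\omega\otimes\omega$, simply make precise two points the paper states briefly: that \Cref{thm:informal-main-result} is an algebraic rewriting, and that the feasible sets ``correspond bijectively under reshaping.''
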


\begin{proof}
We prove the equivalence term by term. 
Since both the formulations are linear in the lifted variables, it is enough to verify the identities on rank-one atoms and then extend by convexity. 
Consider therefore an atom generated by $\omega=(1,W,W',S)$. 

First consider the objective.  
For the quadratic term, we have
\[
\begin{aligned}
    \langle P_v XX^\top P_v^\top \otimes P_u P_u^\top, Z_{11} \rangle 
    & = \langle P_v XX^\top P_v^\top \otimes P_u P_u^\top, \operatorname{vec}(W) \operatorname{vec}(W)^\top \rangle \\
    & = \langle \mathcal{R}(P_v XX^\top P_v^\top \otimes P_u P_u^\top), W \otimes W \rangle \\
    & = \langle \mathcal{R}(P_v XX^\top P_v^\top \otimes P_u P_u^\top), \mathcal{W}_{11} \rangle.
\end{aligned}
\]
The linear term is unchanged by the matrix-indexed notation because the first-order block satisfies $\mathcal{W}_{01}=W$ on atoms:
\[
    \langle P_u^\top W P_vX,Y\rangle
    = \langle W,P_u Y X^\top P_v^\top\rangle
    = \langle P_u Y X^\top P_v^\top,\mathcal{W}_{01}\rangle.
\]
This proves the equivalence of the objective terms.

Next consider the affine constraints.  The trace constraint on $W'$ becomes $\operatorname{tr}(W') = \operatorname{tr}(\mathcal{W}_{02}) = r$. 

The equality $W'+S=I_d$ is equivalent to the componentwise equations $\langle E_{ij},W'+S\rangle=\delta_{ij}$, for $i,j=1,\dots,d$, which takes the form $\langle E_{ij},\mathcal{W}_{02}+\mathcal{W}_{03} \rangle=\delta_{ij}$.

The CP lifting of these affine equations adds the squared constraints. 
For $\operatorname{tr}(W')=r$, the lifted condition is
\[
\begin{aligned}
    \langle \operatorname{vec}(I_d)\operatorname{vec}(I_d)^\top,Z_{22}\rangle
    & = \langle \operatorname{vec}(I_d)\operatorname{vec}(I_d)^\top, \operatorname{vec}(W')\operatorname{vec}(W')^\top \rangle  \\
    & = \langle I_d, W' \rangle \langle I_d, W' \rangle 
    = \langle I_d \otimes I_d , W' \otimes W' \rangle 
    = \langle I_d \otimes I_d , \mathcal{W}_{22} \rangle = \operatorname{tr}(\mathcal{W}_{22}) = r^2.
\end{aligned}
\]
Similarly, the affine equation $\langle E_{ij},W'+S\rangle=\delta_{ij}$ has lifted square $\bigl(\langle E_{ij},W'+S\rangle\bigr)^2=\delta_{ij}^2=\delta_{ij}$.
Expanding the square in lifted variables gives
\[
    \langle E_{ij} \otimes E_{ij}, W' \otimes W' + W'\otimes S + S \otimes W' + S \otimes S \rangle
    = \langle E_{ij} \otimes E_{ij}, \mathcal{W}_{22}+\mathcal{W}_{23}+\mathcal{W}_{32}+\mathcal{W}_{33} \rangle 
    = \delta_{ij}.
\]

It remains to identify the complementarity inequality. 
By definition of $\Delta$, we have on rank-one atoms that
\[
    \operatorname{tr}(\mathcal{W}_{01}) - \langle \Delta,\mathcal{W}_{12}\rangle 
    = \langle I_d,W\rangle-\langle W,W'\rangle
    = \langle W,I_d-W'\rangle.
\]
Using the affine constraint $S=I_d-W'$, this is precisely $\langle W,S\rangle\leq 0$, the complementarity inequality in \eqref{eqn:complementarity-formulation}.

All identities above agree with the corresponding vectorized constraints on every rank-one generating atom.  Since both formulations impose the same linear constraints on convex combinations of these atoms and use the same completely positive cone under the vectorization isomorphism, their feasible sets correspond bijectively under reshaping.  The objective values are preserved by the same identities, so the two formulations are equivalent.
\end{proof}

\section{Conclusion}

We have shown that the squared-loss training problem for deep linear neural networks admits an exact convex reformulation after lifting to a generalized completely positive cone.
The construction follows three main steps: reducing the multilayer parameterization to a rank-constrained semidefinite formulation, rewriting the rank constraint through a complementarity condition, and applying a completely positive lifting.
The resulting convex program has the same global optimal value as the original nonconvex training problem, and the size of the lifted representation depends on the input and output dimensions rather than the network depth or number of samples, with the bottleneck width appearing only in scalar constraints. 

The reformulation is not intended as a scalable algorithm, since membership in the generalized completely positive cone is intractable in general.
Its main value is instead structural, as it identifies the precise conic object whose convex hull captures the nonconvexity induced by linear factorization.
This connection suggests two natural directions for future work; these are developing tractable outer approximations that preserve useful information about the training objective, and understanding whether analogous liftings can provide meaningful convex descriptions for broader neural network architectures.

\section*{Acknowledgments}

This work was supported by the Wallenberg AI, Autonomous Systems and Software Program (WASP) funded by the Knut and Alice Wallenberg Foundation.

\small
\bibliographystyle{plainnat}
\bibliography{bibliography}

\end{document}